\newtheorem{theorem}{Theorem}
\newtheorem{lemma}{Lemma}
\newtheorem{proposition}{Proposition}
\newcommand{\depth}{$H$}
\newcommand{\depthM}{H}
\newcommand{\diam}{D}
\newcommand{\algo}{$\textsc{Circle-Tag}$}
\newcommand{\unitd}{r}
\newcommand{\block}{b_{i,k}}
\newcommand{\prob}{ONFP}
\algnewcommand\algorithmicswitch{\textbf{switch}}
\algnewcommand\algorithmiccase{\textbf{case}}
\algnewcommand\algorithmicforeach{\textbf{for each}}
\title{\LARGE \bf
Asynchronous Network Formation in \\Unknown Unbounded Environments*
}
\author{Selim Engin$^{1}$ and Volkan Isler$^{1}$% <-this % stops a space
\thanks{*This work was supported by the NSF grant \#1617718 and a Minnesota State LCCMR grant}% <-this % stops a space
\thanks{$^{1}$Selim Engin and Volkan Isler are with Department of Computer Science and Engineering,
        University of Minnesota, MN 55455, USA
        {\tt\small \{engin003, isler\}@umn.edu}}%
}
\begin{document}

\maketitle
\thispagestyle{empty}
\pagestyle{empty}

%%%%%%%%%%%%%%%%%%%%%%%%%%%%%%%%%%%%%%%%%%%%%%%%%%%%%%%%%%%%%%%%%%%%%%%%%%%%%%%%
\begin{abstract}
In this paper, we study the Online Network Formation Problem (ONFP) for a mobile multi-robot system. Consider a group of robots with a bounded communication range operating in a large open area. One of the robots has a piece of information which has to be propagated to all other robots. What strategy should the robots pursue to disseminate the information to the rest of the robots as quickly as possible?
The initial locations of the robots are unknown to each other, therefore the problem must be solved in an online fashion.

For this problem, we present an algorithm whose competitive ratio is $O(H \cdot \max\{M,\sqrt{M H}\})$ for arbitrary robot deployments, where $M$ is the largest edge length in the Euclidean minimum spanning tree on the initial robot configuration and $H$ is the height of the tree.
We also study the case when the robot initial positions are chosen uniformly at random and improve the ratio to $O(M)$.
Finally, we present simulation results to validate the performance in larger scales and demonstrate our algorithm using three robots in a field experiment.

\end{abstract}

% need to rewrite the first part of the introduction

%%%%%%%%%%%%%%%%%%%%%%%%%%%%%%%%%%%%%%%%%%%%%%%%%%%%%%%%%%%%%%%%%%%%%%%%%%%%%%%%
\section{Introduction}
Consider a group of robots  with unknown locations scattered over a large area.
The robots can not communicate with each other unless they are within a given communication range.
Such scenarios may occur, for example, after aerial deployments in disaster response situations.
It might also be the case that many robots operate independently in a search task without coordination and disperse across the environment.

Now imagine that one of the robots wants to convey a piece of information to all other robots.
This information can be used for network formation, rendezvous or any other task.
For this purpose, the robot starts searching for other robots. Once a robot is found, it can be recruited to propagate the information.
What strategy should the robot team follow to convey this information as soon as possible?
In this paper, we study this online network formation in unbounded environments.

Our problem is closely related to the Freeze-Tag Problem (FTP), in which there is initially a single \textit{active} (leader) robot and the rest of the robots are \textit{frozen/asleep}, staying put~\cite{arkin2006freeze}. 
The goal is to wake up all the asleep robots in the shortest time.
An asleep robot awakens when an active robot is in its close proximity.
After a robot awakens, it becomes activated and participates in the process of rousing the robots.

The network formation problem can be regarded as the online version of the original FTP, where the robot positions are unknown to each other. 
Without the knowledge of the locations of their peers, the active robots need to perform a search strategy to discover them as illustrated in Figure~\ref{fig:uavFTP}.
Since the robots do not necessarily start executing the algorithm at the same time, our problem is asynchronous. This property of the problem is beneficial especially in real-world systems where exact synchrony is hard to achieve.

\begin{figure}[ht!]
\centering
\includegraphics[width=0.75\columnwidth]{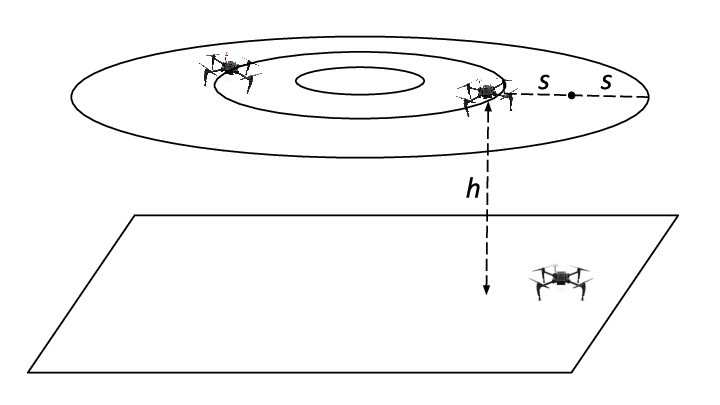}
\caption{An example scenario where two active robots are searching for the frozen robot on the ground. The robot altitude is $h$ and step size is $s$.}
\label{fig:uavFTP}
\end{figure}

%\subsection{Related Work}
Algorithm design for mobile network formation has received significant attention over the past years.
These approaches can be categorized as centralized and decentralized solutions~\cite{zavlanos2011graph}.
In a centralized approach, a base station or a leader robot has access to positions of all the robots and can directly command them~\cite{zavlanos2005controlling,srivastava2008multi,engin2018minimizing}.
On the other hand, with decentralized algorithms the robots can perform tasks using local information, without explicit instructions from an external source~\cite{cortes2006robust,ji2007distributed}.

%Protocol design for ad-hoc networks has received significant attention over the past years.
While centralized approaches are usually efficient and easy to implement, adopting these solutions may not be possible in settings where the communication range of the robots is short.
Consequently, there is an increasing interest in distributed communication protocols.
Gossip protocols, for instance, are designed to disseminate information in a connected network of nodes \cite{boyd2006randomized}.
In the single-piece dissemination scenario of a gossip protocol, an arbitrarily chosen node has a piece of information and the objective is to spread this information to all other nodes in the shortest time \cite{shah2009gossip}. In this one-to-all broadcast setting, the network is stationary and a node propagates the information to its neighbors with an associated probability. 
Contrary to this formulation, in the FTP the nodes are able to move and transfer the piece of information to a neighbor as soon as they are within each other's connectivity range.

In the rendezvous problem, a group of robots need to meet at a point as quickly as possible.
The two-player case with initial positions unknown to each other was studied in \cite{alpern1995rendezvous}.
Roy and Dudek proposed strategies to achieve rendezvous together with the exploration of an unknown environment \cite{roy2001collaborative}.
Ozsoyeller et al. studied the rendezvous search problem on the line \cite{ozsoyeller2013symmetric} and in planar environments \cite{ozsoyeller2019rendezvous}. 
Robots in this formulation do not know the positions of each other and they have to execute the same algorithm. 
Due to this constraint, the strategy is inherently stochastic since otherwise the robots cannot meet by following the same motion.
An asymmetric version of this problem where a robot needs to rendezvous with non-adversarial targets is analyzed in \cite{meghjani2016multi}.

Poduri and Sukhatme studied the problem of establishing a connected network in a bounded toroidal domain using a decentralized approach \cite{poduri2007latency}.
In their model the robots perform random walk, and stop moving when they are near the base station or a robot who is already connected.

The FTP was originally posed in the graph settings where the robots are placed on the vertices of a graph \cite{arkin2006freeze}. The robots can move only along the edges and an asleep robot awakens when its vertex is visited by an active robot. In this formulation, each robot knows the positions of all other robots. 
The online version of the problem was studied by \cite{hammar2006online} who proposed a lower bound on the competitive ratio for graph settings.

The online FTP in the Euclidean domain was further explored in \cite{meisner2009probabilistic}. The authors considered the case where the robots are distributed uniformly at random over a rectangular shaped bounded area. In this case, the robots cannot go outside the closed area. For this problem the authors proposed a logarithmic factor approximation algorithm which depends on the area, therefore is not directly applicable.

The results in \cite{meisner2009probabilistic} rely on a bounded environment assumption. Considering the typical operation environments of field robots, including open seas and skies, this constraint may need to be relaxed.
In this paper, we generalize the environment by removing the closed boundary assumption.
In summary, our contributions can be stated as follows.

\textbf{Our contribution:}
We study a scenario where the robots are deployed in an unbounded environment and the robot positions are unknown to each other.
We analyze two cases of the problem: the initial robot positions are chosen arbitrarily and uniformly at random.
For arbitrary deployments we present a strategy which is $O(\depthM{} \cdot \max\{M,\sqrt{M \depthM{}}\})$-competitive. This result improves the competitive ratio of a naive algorithm by a factor of $\depthM{}$ or $\sqrt{M \depthM{}}$. The algorithm performs better in random deployments and the ratio becomes $O(M)$.
In addition to providing performance guarantees, we demonstrate our algorithm in field experiments as a proof-of-concept implementation.

The remainder of the paper is organized as follows.
%After a literature review of related studies, 
We formulate our problem and introduce the models used throughout the paper in Section~\ref{sec:form}.
In Section~\ref{sec:alg} we present our algorithm and analyze its performance in Section~\ref{sec:anly}.
Section~\ref{sec:sim} contains simulations we conducted.
We describe the outdoor experiments demonstrating the algorithm in Section~\ref{sec:exp}.
Finally, we conclude with a summary and future research directions in Section~\ref{sec:conc}.

\section{Problem Formulation and Models}
\label{sec:form}
In the Online Network Formation Problem (\prob{}), at the beginning there is a single active robot and the goal is to find a strategy to explore all inactive robots such that the time it takes to tag the last robot is minimized.

In this problem, the robot initial positions are unknown to each other, thus the robots use an online strategy.
The performance of online algorithms are usually evaluated in comparison to the optimal offline solution \cite{motwani2010randomized}.
The optimal offline strategy has access to the initial positions of all robots beforehand, so the leader robot does not need to perform a search but instead can directly move towards the inactive robots.
For an instance $\sigma$ of the problem, suppose $A(\sigma)$ is the solution produced by our online algorithm, and $OPT(\sigma)$ is the optimal offline solution. The algorithm $A$ is said to be $c$-competitive if for all instances of the problem the ratio $A(\sigma)/OPT(\sigma) \leq c$ holds.

%We proceed by describing the environment and robot models used throughout the paper.

\subsection{Environment and Robot Models}
In the \prob, we have $n$ robots scattered over an unbounded area. 
Initially, one active robot carries a piece of information to propagate to all other robots.
An asleep robot receives this information (or awakens), once an active robot is within its close proximity.
The robots are assumed to be point robots that always move at constant speed. 

The initial positions of the robots are $\mathcal{X} = \{x_1, \dots, x_n\}$, where each $x_i \in \mathbb{R}^2$. Without loss of generality we assume $x_1$ is the leader's initial position and $x_2$ to $x_n$ are labeled in non-decreasing order with respect to their distances to $x_1$. 
The Euclidean distance between two robots $x_i$ and $x_j$ is denoted by $d(x_i,x_j)$.	
In our analysis, we use the Minimum Spanning Tree (MST) on the initial robot positions and denote it by $\mathcal{T}$.
We denote the longest edge length in $\mathcal{T}$ as $M$, and its height as \depth{}.

Finally, the communication range is denoted by $\unitd{}$. We measure the distances in units of $\unitd{}$, and a robot displaces by $\unitd{}$ per time step.
In the performance analysis of our algorithm, we consider two cases: sparse and dense configurations. A configuration is called sparse if $n \cdot \unitd{} \leq M$, and dense otherwise.
In the rest of the paper we will ignore the unit distance in the expressions and regard as $\unitd{} = 1$.
 
%Next, we present our algorithm for the \prob{}.

\section{The Proposed Algorithm}
\label{sec:alg}
% need to find a better name
In this section, we describe \algo{}, our strategy for the \prob{}.
In essence, the algorithm partitions the unexplored environment into blocks of equal size and assigns an active robot to each of them as shown in Figure~\ref{fig:circletag}.
The blocks are shaped as cut circle sectors and each active robot performs an arc-shaped back-and-forth motion to cover the block. 
This motion guarantees that an asleep robot is going to be explored when it is inside a block.

\begin{figure}[ht!]
\centering
\includegraphics[width=0.7\columnwidth]{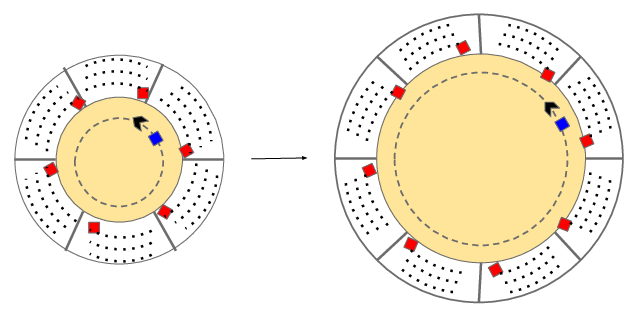}
\caption{\algo{} partitions the area into blocks of equal size and assigns an active robot (red) to each.
The leader (blue) collects the explored robots and assigns blocks in the next round.}
\label{fig:circletag}
\end{figure}

The algorithm includes a set of behaviors which we introduce next.
The execution of our algorithm consists of a sequence of \textbf{rounds}.
A round starts when the leader begins traveling on its circle and ends when the last active robot finishes covering its assigned block for that round.

The motion of the leader robot is called \textbf{Concentric-Circles-Search (CCS)}, and CCS($i$) refers to the path of the leader in round $i$.
The CCS is composed of concentric circles whose radii increase at each round. These circles are always centered at the initial position of the leader robot.
Suppose the leader travels on a circle of radius $R_i$ during a round $i$. If the number of activated robots is $k$, then $R_i$ is given by:

\begin{equation*}
R_i = \begin{cases}
R_{i-1} + 1, & \text{$k=0$ or $k=1$}\\
R_{i-1} + k, & k \geq 2
\end{cases}
\end{equation*}

where the starting radius $R_0$ is zero.
Initially, there is only the leader and no other robot is active, so $k=0$.
Until finding a robot, at each round the leader updates the radius $R_i$ by 1. This motion pattern ensures a full coverage of the leader's surrounding.

At the end of each round, the leader moves in the East direction until it reaches the circle for the next round.
Therefore, CCS($i$) in a round $i$ consists of traveling on a circle of radius $R_i$ and moving by $R_{i+1} - R_i$ towards East to get to the next circle.
An example of this motion is illustrated in Figure~\ref{fig:concentric}. Here, the leader robot is shown in blue. The yellow area is the already covered part and gray area is the region explored in the current circle. The white outer part of the environment is unknown to the robot.
%In the beginning, the first active robot performs a Concentric-Circles-Search (CCS) to find frozen robots.
%In the CCS motion, the leader travels along concentric circles whose radii increase by $r$ each circle, centered at the initial position of the robot. 

%\begin{figure}[ht!]
%\centering
%\includegraphics[width=0.5\columnwidth]{concentricCircles.eps}
%\caption{CCS: The leader robot travels along a circle at each round and goes in the East direction to reach the circle for the next round.}
%\label{fig:concentric}
%\end{figure}

When the leader hits its nearest frozen robot, the activated robot does not immediately participate in the search, but merely follows the leader robot instead.
After the second nearest robot is found, the algorithm proceeds by partitioning the area into two blocks and assigns an active robot for each.
In each round, the active robots other than the leader cover their blocks in multiple back-and-forth sweeps. This motion pattern is called \textbf{Block-Cover} and an instance of it is shown in Figure~\ref{fig:coverage}. 
The actives are reassigned to their new blocks in the beginning of the next round.

\begin{figure}
\centering     %%% not \center
\subfigure[CCS \label{fig:concentric}]{\includegraphics[width=0.3\columnwidth]{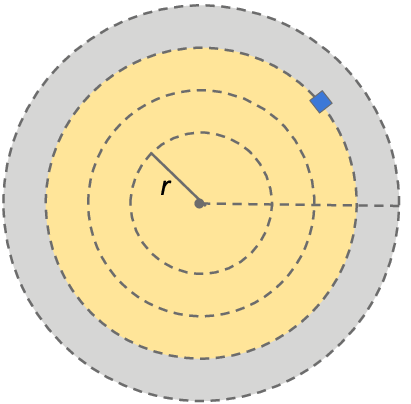}}
\subfigure[Block-Cover \label{fig:coverage}]{\includegraphics[width=0.5\columnwidth]{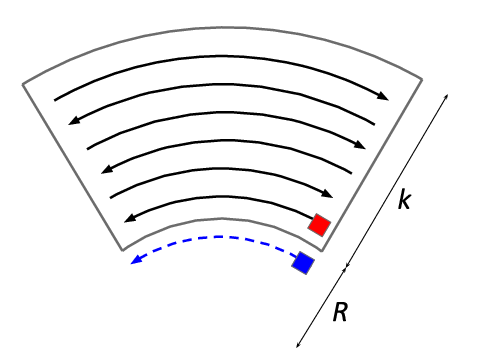}}
\caption{(a) CCS: The leader robot travels along a circle at each round. (b) Block-Cover: Each active robot covers its block in multiple sweeps.}

\end{figure}

Suppose that there are $k_i \geq 2$ active robots, excluding the leader, in the beginning of round $i$.
Then, the number of blocks for that round is also $k_i$.
Let $\block{}$ denote a block in round $i$.
In each block $\block{}$ the active robot travels along $k_i$ concentric arcs and brings all the robots it encounters together, until the round finishes.
The task of the leader robot is to circle around the covered area and reassign the active robots to their blocks, taking the newly found robots into account.

In round $i+1$, the leader robot starts by collecting the robots discovered in round $i$, as it passes from the active robot locations.
An active robot starts sweeping the arcs as soon as it receives the block assignment.
The leader robot keeps the collected robots until the end of the round and assigns blocks for them in the next round.
Therefore, a robot explored in round $i$ joins the search starting from round $i+2$.
The algorithm proceeds in this manner by expanding the coverage area, and terminates when all robots are discovered.

%%%%%%%%%%%%%%%%%%%%%%%%%%%%%%%%%%%%%%%%%%%%%%%%%%%
%       Circle-Tag Algorithm
%       Pseudocode
%%%%%%%%%%%%%%%%%%%%%%%%%%%%%%%%%%%%%%%%%%%%%%%%%%%
%\begin{algorithm}
%\caption{\textsc{\algo{}}}
%\label{algo:circletag}
%\begin{algorithmic}[1]
%\Require{$status$: either \textit{leader}, \textit{active} or \textit{frozen}}
%\ForEach {round $i$}
%\Switch{$status$}
%    \Case{\textit{leader}}
%      \State Travel on CCS($i$)
%      \State Assign blocks and pick frozens while touring
%    \EndCase
%    \Case{\textit{active}}
%      \State Wait until the leader passes
%      \State $\block{} \leftarrow$ Block assignment from the leader
%      \State Perform Block-Cover($\block{}$)
%    \EndCase
%    \Case{\textit{frozen}}
%      \State Wait until found by an active robot
%      \State Follow the active until the end of the round $i$
%      \State After it picks, follow the leader in round $i+1$
%      \State $status \leftarrow$ \textit{active}
%    \EndCase
%\EndSwitch
%\EndFor
%
%\end{algorithmic}
%\end{algorithm}
%%%%%%%%%%%%%%%%%%%%%%%%%%%%%%%%%%%%%%%%%%%%%%%%%%%

%The algorithm proceeds in this manner by expanding the coverage area, and terminates when all robots are discovered.
%A pseudocode for \algo{} is presented in Algorithm~\ref{algo:circletag}.

\section{Performance of \algo{}}
\label{sec:anly}
We proceed with analyzing the performance of \algo{}. We start with the case where the initial positions of the robots are chosen arbitrarily, then continue with the case when they are distributed uniformly at random.

\subsection{Sweeping a Block}
The algorithm \algo{} begins the process of exploring the frozen robots starting from the leader's initial position.
After the leader awakens two robots, each activated robot works on a dedicated zone assigned by the leader.

Suppose there are already explored $k_i$ robots in round $i$.
Thus, the number of blocks is also $k_i$.
An active robot sweeps its block $\block{}$ in $k_i$ arcs in increasing radius whose apex angle is $2\pi/k_i$ (see Figure~\ref{fig:coverage}).
The asleep robots encountered during the coverage follow the active robot until the end of the round and become activated in the next round.

%\begin{figure}[ht!]
%\centering
%\includegraphics[width=0.5\columnwidth]{blockCover.eps}
%\caption{Block-Cover: Each active robot covers its designated block in multiple back-and-forth sweeps}
%\label{fig:coverage}
%\end{figure}

\begin{proposition}
\label{prop:roundTime}
Consider a round $i$ with $k_i$ active robots, and let $C(\block{})$ denote the coverage time of a block $\block{}$ using Block-Cover.
The time it takes to complete the round is no more than $2C(\block{})$.
\end{proposition}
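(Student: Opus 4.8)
The plan is to bound the round completion time by accounting separately for the time the leader spends on its circle (including the Eastward transit to the next circle) and the time each non-leader active robot spends executing Block-Cover on its assigned block $\block{}$, and then to argue that the round's duration is governed by whichever of these is larger — which will turn out to be the Block-Cover time up to a factor of $2$. First I would observe that by the definition of a round, the round ends when the \emph{last} active robot finishes covering its block; since all $k_i$ non-leader robots run Block-Cover on congruent blocks (the $k_i$ arcs of apex angle $2\pi/k_i$ with the same radius schedule), they all have the same coverage time $C(\block{})$ once they have received their assignments, so the only asymmetry among them is the staggered start caused by the leader delivering assignments as it travels its circle.

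Next I would bound that stagger. The leader, moving along the circle of radius $R_i$, reaches every active-robot location and hands off the block assignment; the extra delay incurred by a robot that is reached late is at most the time for the leader to traverse its full circle plus the Eastward hop, i.e.\ at most $2\pi R_i + (R_{i+1}-R_i)$. The key step is then to show this quantity is dominated by $C(\block{})$. Intuitively, a block $\block{}$ is (a sector-shaped slice of) an annulus reaching out to radius $R_i$, so sweeping it with arcs of total angular extent $2\pi/k_i$ at radii up to $R_i$, repeated across $k_i$ radial levels, forces the active robot to travel a distance comparable to — in fact at least a constant times — the leader's circle length $2\pi R_i$, because the $k_i$ arcs each of angular width $2\pi/k_i$ at the outermost radii already sum to a full circumference, and the radial back-and-forth only adds to this. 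Hence the leader's lap time is at most $C(\block{})$, the Eastward hop $R_{i+1}-R_i = k_i$ is an even lower-order term (and also bounded by $C(\block{})$), and the total round time is at most the per-robot Block-Cover time $C(\block{})$ plus the stagger $C(\block{})$, giving $2C(\block{})$.

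The main obstacle I anticipate is making the comparison ``leader lap length $\le C(\block{})$'' fully rigorous, since $C(\block{})$ must be expressed in terms of the actual sweep geometry (arc lengths at the scheduled radii, plus the radial connectors between successive arcs) rather than treated as a black box; I would need to pin down exactly which radii the $k_i$ arcs in a block occupy and confirm that their cumulative length is at least $2\pi R_i$ (equivalently, that the outermost arcs alone already cover a full turn's worth of travel when summed over the $k_i$ blocks, so per block it is $2\pi R_i / k_i$ per level times $k_i$ levels $= 2\pi R_i$). A secondary subtlety is handling the degenerate early rounds with $k_i < 2$, but the proposition is stated for $k_i$ active robots with the Block-Cover behavior in force, so I would simply note the bound is vacuous or trivial there and focus the argument on $k_i \ge 2$.
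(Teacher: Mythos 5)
Your decomposition of the round --- all $k_i$ blocks are congruent so each non-leader needs exactly $C(\block{})$ once assigned, the only slack is the staggered hand-off of assignments along the leader's circle, and the round time is therefore at most (latest hand-off time) $+\;C(\block{})$ --- is the natural and, as far as one can tell (the paper defers the proof to its technical report), the intended route. The gap is in the step you yourself flag as the main obstacle: the claim that the leader's lap length $2\pi R_i$ is at most $C(\block{})$. Your justification counts ``$2\pi R_i/k_i$ per level times $k_i$ levels $=2\pi R_i$,'' which implicitly places all $k_i$ arcs of a block at the outermost radius. Within a single block only one arc is at the outer radius; the arcs sit at radii $R_{\mathrm{in}}+1,\dots,R_{\mathrm{out}}$ with $R_{\mathrm{out}}-R_{\mathrm{in}}=k_i$, so the per-block arc length is
\begin{equation*}
\sum_{j=1}^{k_i}\frac{2\pi}{k_i}\,(R_{\mathrm{in}}+j)\;=\;\pi\,(R_{\mathrm{in}}+R_{\mathrm{out}})+\pi,
\end{equation*}
which falls short of $2\pi R_{\mathrm{out}}$ by roughly $\pi k_i$, and the $k_i-1$ unit radial connectors do not make up the difference. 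So if the leader's circle is the \emph{outer} boundary of the annulus being swept, the inequality you need is false and your argument only yields $2C(\block{})+\Theta(k_i)$, not $2C(\block{})$.

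The fix is to pin down where the hand-offs actually occur. The active robots end a round at the outer edge of the annulus they just swept, which is the \emph{inner} edge of the next round's annulus, and the leader must pass their locations to deliver assignments; hence the lap on which assignments are delivered has radius $R_{\mathrm{in}}$, not $R_{\mathrm{out}}$. Anchored there, every arc of every block lies at radius at least $R_{\mathrm{in}}$, so the per-block sweep length is at least $k_i\cdot\frac{2\pi}{k_i}\cdot R_{\mathrm{in}}=2\pi R_{\mathrm{in}}$, which dominates the lap, and (since the Eastward hop of length $k_i$ is likewise absorbed) the stagger is at most $C(\block{})$ and the factor $2$ follows. Alternatively one can observe that the last hand-off occurs after only a $(k_i-1)/k_i$ fraction of the lap. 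Either repair is short, but without one of them the central inequality of your sketch does not hold as stated.
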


%\begin{proof}
%In round $i$, the leader robot performs CCS($i$), and its travel time in this round is $2\pi R_i + k_i$.
%In the remainder of the proof, we drop the index $i$ in $R_i$ and $k_i$ to simplify the notation.
%
%An active robot starts the coverage as soon as the leader passes from its location and assigns the block, specifying the apex angle and the number of arcs.
%Using Block-Cover, the coverage time of a block $\block{}$ is given by:
%
%\begin{equation}
%C(\block{}) = k + \sum_{j=1}^{k} {2\pi \frac{R + j}{k}} = 2\pi R + k(\pi + 1) + \pi.
%\end{equation}
%
%In a round, the last active robot starts sweeping after the leader has traveled $2\pi R - 2\pi R/k = 2\pi (1-1/k)$.
%The travel time of the leader is $2\pi R + k$ and the last active robot finishes sweeping $C(\block{}) + 2\pi (1-1/k)$ units after the beginning of the round. 
%This means there is no overhead caused by the leader because $2\pi R + k < C(\block{}) + 2\pi (1-1/k)$.
%Thus, $2C(\block{})$ constitutes an upper bound on the completion time of a round whose blocks can be covered in $k$ arcs.
%\end{proof}

The proof is presented in an accompanying technical report \cite{techrep2018nf}.
Proposition~\ref{prop:roundTime} tells us that the execution time of a round is bounded by twice the size of a block in that round.
%In the analysis of our algorithm, we will bound the execution time of a round by twice the size of the block in that round.

\subsection{Exploration with \algo{}}
We continue by analyzing the performance of \algo{} in unbounded environments.
Our analysis consists of two cases for the initial configuration of the robots. We say a configuration is sparse if $n \leq M$, and dense if $n > M$.
We start with the sparse case.

\subsubsection{Sparse configurations}
In the beginning of the execution, the leader robot starts by performing the CCS centered at its initial position as shown in Figure~\ref{fig:concentric}.
Consider the MST $\mathcal{T}$ on the initial configuration of the robots, and let $M$ denote the longest edge length in $\mathcal{T}$.
Suppose $x_1$ is the leader robot, and $x_2$ and $x_3$ are the first and second activated robots, respectively. 
Since $\mathcal{T}$ spans all the robots, we know that the nearest neighbor distance of a robot cannot be more than $M$.
Therefore, $d(x_1,x_2) \leq M$, and from the triangle inequality $d(x_1,x_3) \leq 2M$.
Then, the time it takes to explore the first two robots is at most $4\pi M^2$.

To upper bound the solution of our algorithm, that is the time it takes to explore all robots using \algo{}, we consider the worst case.
The worst case configuration for the \prob{} is when the robots are initially in a path configuration with an equal spacing of $M$, and the leader robot is the leftmost or rightmost robot.
This means that the tree $\mathcal{T}$ is a path with all edges of length $M$.
This configuration is the worst case scenario because it has the maximum furthest pairwise distance among all possible configurations.

The analysis of our algorithm comprises multiple \textbf{phases}. 
% Each phase is a collection of rounds, and a phase finishes when there is a newly activated robot at the end of a round.
Each phase is a collection of rounds, and the $t$-th phase finishes at the end of a round where the leader robot reaches a distance $t \cdot M$ away from its initial location.
This ensures that there is a newly activated robot at the end of each phase.
Note that the phases are only used in the analysis and they are not explicitly available to the robots.

For any two consecutive phases $t-1$ and $t$, consider the circles they started and let their radii be $R_{t-1}$ and $R_t$, respectively (see Figure~\ref{fig:phases}).
We define the radius difference as $\Delta^- := R_t - R_{t-1}$, and sum as $\Delta^+ := R_t + R_{t-1}$.
%We continue by upper bounding $\Delta^-$ and $\Delta^+$.

\begin{figure}[ht!]
\centering
\includegraphics[width=0.7\columnwidth]{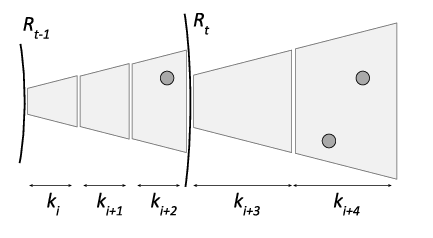}
\caption{Two consecutive phases $t-1$ and $t$ are shown. A phase finishes when there is an activated robot at the end of a round in that phase.}
\label{fig:phases}
\end{figure}

Let $k_t$ be the number of active robots except the leader in phase $t$.
The radius difference $\Delta^-$ cannot be more than $2M$.
This is because the distance between a robot found in phase $t$ to an already explored robot in earlier phases is at most $M$, and since there are at most $k_t$ extra sweeps in a phase after finding a robot, the radii difference $\Delta^-$ is no more than $M + k_t < M + n \leq 2M$. The last inequality holds because $n \leq M$ in sparse configurations.

The radius sum $\Delta^+$ can be bounded by $2M(t-1) + 2Mt < 4Mt$, since the radius of the starting circle of a phase increases by at most $\Delta^- < 2M$.

In each round of phase $t$, the blocks have height $k_t$ since the number of activated robots remains same within a phase and using Block-Cover the active robots sweep $k_t$ arcs.
Therefore, until the next robot is explored, in phase $t$ there will be at most $\Delta^-/k_t < 2M/k_t$ rounds in the worst case. 

Suppose $\diam{}$ is the diameter of the tree $\mathcal{T}$.
The distance between the initial position of the leader robot and its furthest neighbor is no more than $M \diam{}$.
Then, there are at most $2\depthM{}$ phases in the worst case, since $M \diam{}/M = \diam{} \leq 2\depthM{}$.

We can then bound the solution of \algo{} as:

\begin{equation}
\label{eq:sol}
\begin{split}
SOL & \leq 4\pi M^2 + \sum_{t=1}^{2\depthM{}}{2\pi \frac{(R_t^2 - R_{t-1}^2)}{k_t}} \\
	%& \leq 4\pi M^2 + \sum_{t=1}^{\depthM{}}{2\pi \frac{\Delta^- \Delta^+}{k_t}} \\
	& < 4\pi M^2 + \sum_{t=1}^{2\depthM{}}{2\pi \frac{8M^2t}{t}} = M^2 (32\pi \depthM{} + 4\pi),
\end{split}
\end{equation}

where the radius $R_0$ is zero, and $k_t \geq t$ since in each phase there is at least one explored robot.
For sparse configurations, we then arrive at the following result.

\begin{lemma}
\label{lem:solSparse}
Suppose the MST on the initial configuration has height \depth{} and its maximum edge length is $M$.
Then, the time to explore all robots using \algo{} in sparse configurations is $O(M^2 \depthM{})$.
\end{lemma}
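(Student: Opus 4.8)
The plan is to assemble the chain of estimates developed in the paragraphs preceding the statement into a single bound on the running time $SOL$ of \algo{}, i.e.\ to establish inequality~(\ref{eq:sol}), from which $SOL = O(M^2\depthM{})$ is immediate. First I would dispatch the start-up cost. Since $\mathcal{T}$ is spanning, every robot has a nearest neighbor within distance $M$, so $d(x_1,x_2)\le M$ and, by the triangle inequality, $d(x_1,x_3)\le 2M$. Hence during CCS the leader's circle radius is incremented by $1$ each round and never exceeds $2M$ before $x_2$ and $x_3$ are found; summing the circle circumferences $2\pi(1+2+\cdots)$ up to radius $2M$ (the short Eastward hops between circles contribute only $O(M)$) bounds the time to activate the first two robots by $4\pi M^2$.

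Next I would reduce to the worst case. Among all configurations with maximum MST edge length $M$ and height \depth{}, the one that forces the CCS onto the largest circles is the path with equal spacing $M$ and the leader at an endpoint, since it has the largest furthest pairwise distance; it therefore suffices to bound $SOL$ on this instance. On it I would introduce the \textbf{phases} exactly as in the text: phase $t$ ends at the first round in which the leader's circle has radius at least $t\cdot M$. Because the path has edges of length $M$, this guarantees at least one new robot is activated by the end of phase $t$, so $k_t \ge t$, which is the only property of the phases I need.

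Then come the three geometric estimates, each already sketched above the statement. (i) The radius jump $\Delta^- = R_t - R_{t-1}$ satisfies $\Delta^- \le M + k_t < M + n \le 2M$: a robot found in phase $t$ lies within $M$ of a previously explored robot, and after finding it the round adds at most $k_t$ further arcs; this is where sparsity ($n\le M$) enters. (ii) Since each $R_t$ grows by at most $\Delta^- < 2M$ per phase, $R_t \le 2Mt$, so $\Delta^+ = R_t + R_{t-1} < 4Mt$. (iii) The diameter of $\mathcal{T}$ obeys $\diam{}\le 2\depthM{}$, so the furthest robot from $x_1$ is at distance at most $M\diam{}\le 2M\depthM{}$, giving at most $2\depthM{}$ phases. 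Finally, for the cost of phase $t$ I would use Proposition~\ref{prop:roundTime}: a round costs at most $2C(\block{})$, and a block of angular width $2\pi/k_t$ spanning radii from $R_{t-1}$ to $R_t$ is covered by $k_t$ arcs plus radial moves, whose lengths telescope so that the total time spent in phase $t$ is at most $2\pi(R_t^2-R_{t-1}^2)/k_t$ up to an additive $O(M)$. Substituting $R_t^2 - R_{t-1}^2 = \Delta^-\Delta^+ < 8M^2 t$ and $k_t \ge t$ gives a per-phase bound of $16\pi M^2$; summing over the $\le 2\depthM{}$ phases and adding the $4\pi M^2$ start-up cost yields $SOL < M^2(32\pi\depthM{} + 4\pi) = O(M^2\depthM{})$.

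The main obstacle is not the arithmetic but the two informal reductions: justifying rigorously that the equally spaced path with the leader at an endpoint is genuinely the worst case (no other configuration forces larger circles or a less favorable ratio of $k_t$ to the phase index), and turning the hand-waved ``telescoping'' of the arc lengths swept in a round into the clean $2\pi(R_t^2 - R_{t-1}^2)/k_t$ bound used in~(\ref{eq:sol}). Once those are pinned down, the rest is bookkeeping with the recurrence for $R_i$ and the already-granted Proposition~\ref{prop:roundTime}.
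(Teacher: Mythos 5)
Your proposal follows the paper's own argument essentially step for step: the $4\pi M^2$ start-up cost, the reduction to the equally spaced path, the phase decomposition with $\Delta^- < 2M$, $\Delta^+ < 4Mt$, at most $2H$ phases, $k_t \ge t$, and the summation in~(\ref{eq:sol}) yielding $M^2(32\pi H + 4\pi)$. The two informal points you flag (the worst-case reduction and the telescoping of arc lengths into $2\pi(R_t^2 - R_{t-1}^2)/k_t$) are likewise left at the same level of rigor in the paper, so there is nothing substantive to add.
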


%We continue with the dense configurations.

\subsubsection{Dense configurations}
In dense configurations, $n$ is larger than $M$ so we cannot bound the completion time of a phase as before.
However, since the robots use the same strategy in both cases, for the part when the number of activated robots $k_t$ in phase $t$ is smaller than $M$, the performance analysis will be the same.

\begin{lemma}
\label{lem:solDense}
The time it takes to explore all robots in dense configurations is $O(\max \{M^2 \depthM{}, (M \depthM{})^{1.5} \})$, if the initial configuration MST has height \depth{} and maximum length $M$.
\end{lemma}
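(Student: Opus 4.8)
\textit{Proof plan.} The plan is to reuse the phase decomposition of the sparse analysis and split the phases into two groups by the size of $k_t$. Call a phase \emph{light} if $k_t<M$ and \emph{heavy} if $k_t\ge M$, and call a phase \emph{active} if it contains at least one round; when $k_t$ is large a single round advances the radius by $k_t>M$ and thus crosses several phase boundaries at once, so some phases contain no round and cost no time. Since $k_t$ is non-decreasing, the light phases form an initial segment $1,\dots,t^\ast$ of the execution, and on that segment the bound $\Delta^-\le M+k_t<2M$ used in the sparse case still holds; hence the bookkeeping in the proof of Lemma~\ref{lem:solSparse} applies verbatim, giving each light phase cost $O(M^2)$ and, since $k_t\ge t$ forces $t<M$ while also $t\le\diam{}\le 2\depthM{}$, at most $O(\depthM{})$ light phases, for a total light contribution of $O(M^2\depthM{})$.

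For the heavy phases I would proceed as follows. By Proposition~\ref{prop:roundTime}, and since in phase $t$ each active robot covers a $1/k_t$ fraction of the annulus between $R_{t-1}$ and $R_t$, the time spent in phase $t$ is at most $2\pi(R_t^2-R_{t-1}^2)/k_t=2\pi\Delta^+\Delta^-/k_t$. As $k_t\ge M$ we have $\Delta^-\le M+k_t\le 2k_t$, so this is at most $4\pi\Delta^+=4\pi(R_t+R_{t-1})\le 8\pi R_t$, and summing, the heavy phases cost $O\!\left(\sum_{t\ \text{active}}R_t\right)$. I would bound this sum with two facts. First, every $R_t$ is at most $R_{\max}=O(M\depthM{})$, because the farthest robot lies at Euclidean distance at most $M\diam{}\le 2M\depthM{}$ from the leader and the CCS radius need not exceed that by more than one round's advance. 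Second --- the crucial point --- the number $m$ of active phases is $O(\sqrt{M\depthM{}})$: the $j$-th active phase has index $t_j\ge j$ and contains a round advancing the radius by $k_{t_j}\ge t_j\ge j$, so $R_{\max}\ge\sum_{j=1}^m j=\Omega(m^2)$, whence $m=O(\sqrt{R_{\max}})=O(\sqrt{M\depthM{}})$; trivially also $m\le\diam{}=O(\depthM{})$. Combining, $\sum_{t\ \text{active}}R_t\le m\,R_{\max}=O\!\left(M\depthM{}\cdot\min\{\depthM{},\sqrt{M\depthM{}}\}\right)=O\!\left(\max\{M^2\depthM{},(M\depthM{})^{1.5}\}\right)$, the cases $M\ge\depthM{}$ and $M<\depthM{}$ yielding the two terms.

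Adding the light and heavy parts gives the claimed bound. The main obstacle is the second fact above: counting all $O(\depthM{})$ phases and charging each the worst-case cost $O(M\depthM{})$ only yields the weaker $O(M\depthM^2)$, and the improvement comes entirely from observing that the CCS radius grows \emph{super-linearly} in the active-phase index, so it cannot stay small for many active phases. The details I expect to spend time on are the bookkeeping for phases containing no round, the constant slack from the two-round activation delay (so that $k_t\ge t$ is really $k_t\ge t-O(1)$), and checking that the final round's overshoot does not push $R_{\max}$ past $O(M\depthM{})$.
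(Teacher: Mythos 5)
Your plan matches the paper's approach: the paper's own proof sketch for this lemma is precisely to bound $R_t$ (by $O(M\depthM{})$ via the MST diameter) and then to bound the number of rounds in the regime $k_t \geq M$, which is exactly your light/heavy split together with the key observation that the CCS radius grows super-linearly in the active-phase index, forcing at most $O(\min\{\depthM{},\sqrt{M\depthM{}}\})$ costly phases of cost $O(M\depthM{})$ each. The bookkeeping issues you flag (phases containing no round, the two-round activation lag in $k_t \geq t$, and the final round's overshoot) are genuine but routine, and none of them breaks the argument.
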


\textit{Sketch of Proof:}
After bounding the value of $R_t$, we upper bound the number of rounds for the part where $k_t \geq M$. 
The full proof is presented in \cite{techrep2018nf}.

%We proceed with lower bounding the optimal offline strategy.
%If a strategy is offline, then it has access to all the inputs at the beginning of the execution.
%This means that the active robots do not need to search the environment, but instead go directly to the frozen robot locations using the shortest paths.

%A natural lower bound on the optimal offline solution is the distance between the leader's initial position $x_1$ and its furthest neighbor $x_n$.
%The time it takes to activate the furthest neighbor is at least $d(x_1,x_n)$, since any other path that does not go straight to $x_n$ takes more time.

%However, in our analysis we use a different lower bound.
We proceed with lower bounding the optimal offline strategy.
Consider $\mathcal{T}$, the MST on the initial configuration of the robots.
The longest edge length $M$ in $\mathcal{T}$ is a lower bound for the optimal solution, because no strategy with execution time less than $M$ can tag all the robots.
The following is the main result of our paper for arbitrary deployments.

\begin{theorem}
\label{thm:arb}
There exists an $O(\depthM{} \cdot \max\{M,\sqrt{M \depthM{}}\})$-competitive algorithm to the Online Network Formation Problem for arbitrary deployments in unbounded environments, where \depth{} is the height of the MST on the initial configuration and $M$ is the maximum edge length in the tree.
\end{theorem}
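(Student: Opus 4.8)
The plan is to assemble the theorem from the two running-time bounds of Lemma~\ref{lem:solSparse} and Lemma~\ref{lem:solDense} together with a lower bound on the optimal offline cost, splitting on whether the configuration is sparse or dense.

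The first ingredient is a lower bound on $OPT(\sigma)$. Removing the longest edge of the MST $\mathcal{T}$, which has length $M$, cuts the robots into two nonempty groups; since that edge is the cheapest one crossing this cut, every pair of robots straddling the cut is separated by distance at least $M$. As the leader is the only active robot at time $0$ and it lies in one of the groups, some robot in the other group must eventually be tagged, which cannot happen before time $\Omega(M)$. Hence $OPT(\sigma) = \Omega(M)$ for every instance $\sigma$, which is the bound stated just above the theorem; I would only make the cut argument explicit.

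Next I would case on the configuration. If $\sigma$ is sparse ($n \leq M$), Lemma~\ref{lem:solSparse} bounds the running time of \algo{} by $O(M^2 \depthM{})$, so $A(\sigma)/OPT(\sigma) = O(M\depthM{})$, and since $\depthM{} \cdot \max\{M,\sqrt{M \depthM{}}\} \geq M\depthM{}$ this already lies within the claimed ratio. If $\sigma$ is dense ($n > M$), Lemma~\ref{lem:solDense} bounds the running time by $O(\max\{M^2\depthM{}, (M\depthM{})^{1.5}\})$; dividing by $OPT(\sigma) = \Omega(M)$ gives $A(\sigma)/OPT(\sigma) = O(\max\{M\depthM{}, \sqrt{M}\,\depthM^{1.5}\})$, and rewriting $\sqrt{M}\,\depthM^{1.5} = \depthM{}\sqrt{M \depthM{}}$ shows this equals $O(\depthM{} \cdot \max\{M,\sqrt{M \depthM{}}\})$, exactly the claim. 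Taking the larger of the two bounds over all instances completes the proof.

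The assembly is routine; the substantive work lives in Lemma~\ref{lem:solDense}, whose proof (deferred to the technical report) must bridge the phase of execution with fewer than $M$ awake robots --- which mimics the sparse analysis and produces the $M^2\depthM{}$ term --- and the phase with at least $M$ awake robots, which produces the $(M\depthM{})^{1.5}$ term through a separate bound on the number of rounds per phase. That handoff is the step I expect to demand the most care. I would also note that the theorem asserts no matching lower bound, so no optimality argument is needed here.
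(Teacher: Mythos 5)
Your proposal is correct and follows essentially the same route as the paper: combine the $O(\max\{M^2 \depthM{}, (M\depthM{})^{1.5}\})$ bound from Lemmas~\ref{lem:solSparse} and~\ref{lem:solDense} with the lower bound $OPT = \Omega(M)$, and the algebra $\max\{M^2\depthM{}, (M\depthM{})^{1.5}\}/M = \depthM{}\cdot\max\{M, \sqrt{M\depthM{}}\}$ gives the claimed ratio. Your explicit cut argument for the lower bound and the sparse/dense case split are just more detailed renderings of what the paper states immediately before and inside its (very terse) proof.
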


\begin{proof}
In Lemmas~\ref{lem:solSparse} and \ref{lem:solDense} we have shown that the time it takes to explore all the robots with \algo{} is $O(\max \{M^2 \depthM{}, (M \depthM{})^{1.5} \})$. Lower bounding the optimal solution by $M$ concludes the proof.
\end{proof}

%Theorem~\ref{thm:arb} establishes our result for arbitrary robot deployments.
%We continue by analyzing the case where the robots are distributed uniformly at random.

\subsection{\algo{} in Random Deployments}
In this section, we present the performance of \algo{} for the case when the initial robot positions are chosen uniformly at random.
Suppose that the robots are distributed uniformly at random over an open circular area of radius $L$, which is unknown to the robots.
%Neither the environment dimensions nor the robot positions are available to the robots. 

Because the environment radius is $L$, the furthest pairwise distance between the robots is no more than $2L$. We have a loose upper bound by noting that  all robots can be tagged by the first robot in time $4\pi L^2$.
Using \algo{}, however, we show that we can significantly improve the performance to $O(M^2)$.
We analyze the execution time of the algorithm in phases as in the case of arbitrary deployments.

\begin{proposition}
\label{prop:nRings}
Consider a partition of the environment surrounding the leader's initial position $x_1$ with $N$ circular rings of increasing radius $i M$, for $i = 1,\ldots, N$, until all robots are included.
Then, each ring has at least one robot in it and the number of rings is bounded as $N \leq 2L/M$.
\end{proposition}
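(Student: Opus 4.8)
The plan is to establish the two assertions of the proposition separately, and to do so for \emph{any} deployment inside the radius-$L$ disk, since neither claim actually uses the randomness — it enters only in the later analysis. Write ring $j$ for the annulus of points whose distance from $x_1$ lies in $\big((j-1)M,\,jM\big]$, and let $N$ be the least index for which every robot lies within distance $NM$ of $x_1$, so that the furthest robot sits at distance strictly greater than $(N-1)M$.

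For non-emptiness I would argue by contradiction using the longest-edge property of $\mathcal{T}$, in the same spirit as the sparse-configuration analysis. Suppose some ring $j$ with $1 \le j \le N$ contains no robot. Split the robot set into $S_{\mathrm{in}}$, those at distance $\le (j-1)M$ from $x_1$, and $S_{\mathrm{out}}$, those at distance $>jM$; since ring $j$ is empty these two sets partition all robots. $S_{\mathrm{in}}$ is nonempty because it contains $x_1$, and $S_{\mathrm{out}}$ is nonempty because some robot lies beyond distance $(N-1)M \ge jM$ when $j<N$ (by minimality of $N$), while if $j=N$ the furthest robot itself lies in ring $N$, contradicting emptiness at once. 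Connectivity of $\mathcal{T}$ forces an edge with one endpoint $p\in S_{\mathrm{in}}$ and one $q\in S_{\mathrm{out}}$; by the triangle inequality through $x_1$, $d(p,q) \ge d(x_1,q)-d(x_1,p) > jM-(j-1)M = M$, contradicting that $M$ is the maximum edge length of $\mathcal{T}$. Hence every ring $1,\dots,N$ contains a robot.

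For the bound on $N$ I would use only that all robots, $x_1$ included, lie in a disk of radius $L$, so $d(x_1,x_i)$ is at most the diameter $2L$ for every $i$. Since the furthest robot is at distance greater than $(N-1)M$, this gives $(N-1)M < 2L$, i.e. $N < 2L/M + 1$, which yields the asserted $N \le 2L/M$ once the additive constant is absorbed (equivalently, $N = O(L/M)$; the outermost ring is genuinely needed only because the first part guarantees it contains a robot).

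The only point requiring real care is the non-emptiness claim: the idea ``an empty ring would force an MST edge crossing a radial gap of width more than $M$'' is immediate once stated, but one must get the annulus bookkeeping right — which radii bound $S_{\mathrm{in}}$ and $S_{\mathrm{out}}$, and why both sets are nonempty for every $j \le N$ (the case $j=1$ reduces to the already-noted fact that the nearest-neighbor distance is at most $M$, and the case $j=N$ is handled directly rather than via minimality). The counting of $N$ is then a one-line diameter estimate, so I do not anticipate any further obstacle.
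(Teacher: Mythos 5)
Your proof is correct and uses exactly the ingredients the paper itself relies on elsewhere (the paper defers this particular proof to its technical report \cite{techrep2018nf}): an empty width-$M$ annulus would split the robots into two nonempty sets whose connecting MST edge must exceed the maximum edge length $M$, and the ring count follows from the $2L$ diameter bound. The only blemish is the final rounding step, where your argument honestly yields $N \le \lceil 2L/M\rceil \le 2L/M + 1$ rather than the stated $N \le 2L/M$; that off-by-one is really an imprecision in the proposition as written and is harmless for the $O(L/M)$ use made of it in Lemma~\ref{lem:sol-unif}.
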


%\begin{proof}
%We prove this proposition by contradiction.
%Suppose there is a ring without any robots in it.
%This means that between the robots inside and outside this ring, all pairwise distances are larger than $M$.
%Then, the tree $\mathcal{T}$ has an edge longer than $M$, which is a contradiction.
%$NM \leq 2L$ since the maximum pairwise distance is at most $2L$.
%\end{proof}

In our analysis we use the expected number of robots in the rings. Let $S_i$ denote the number of robots in ring $i$, and $S_{1:i}$ be the number of robots up to and including the ring $i$.
To compute the expected values of $S_i$ and $S_{1:i}$, we first assume that the leader position is at the center of the area.

\begin{proposition}
\label{prop:nRobInRing}
Suppose the $i$-th innermost ring has radius $i M$ and there are $S_i$ robots inside. Then, the expected value of $S_i$ is lower bounded as $\mathbb{E}[S_i] \geq n (2i-1)M^2/4L^2$.
\end{proposition}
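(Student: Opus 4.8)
The plan is to compute $\mathbb{E}[S_i]$ by linearity of expectation over the individual robots, so that the entire statement reduces to one elementary area comparison. Since we assume the leader sits at the center of the deployment region, each of the remaining robots $x_j$ is an independent sample from the uniform distribution on $B(x_1,L)$, the disk of radius $L$ centered at the leader, whose area is $\pi L^2$. Writing ring $i$ for the annulus $\{y : (i-1)M \le d(y,x_1) < iM\}$ and $S_i = \sum_{j\ge 2} \mathbbm{1}[x_j \in \text{ring } i]$, linearity immediately gives $\mathbb{E}[S_i] = (n-1)\,\Pr[x_j \in \text{ring } i] \ge n\,\Pr[x_j \in \text{ring } i]/2$ once $n\ge 2$; keeping a plain $n$ in the statement only weakens the bound, which the constant will absorb.

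Next I would evaluate the per-robot probability as a ratio of areas, $\Pr[x_j \in \text{ring } i] = \mathrm{Area}\!\big(\text{ring } i \cap B(x_1,L)\big)/(\pi L^2)$. The annulus between radii $(i-1)M$ and $iM$ has area $\pi\big((iM)^2-((i-1)M)^2\big)=\pi(2i-1)M^2$. Whenever this annulus lies inside the deployment disk, i.e. $iM\le L$, the intersection is the whole annulus, the probability is exactly $(2i-1)M^2/L^2$, and hence $\mathbb{E}[S_i] \ge n(2i-1)M^2/L^2$, which is four times stronger than the claimed bound. So for every ring contained in the disk the proposition follows at once, with a factor of $4$ to spare.

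The step I expect to require the most care is the \emph{outermost} ring, whose outer radius $NM$ can exceed $L$ (by Proposition~\ref{prop:nRings}, $N$ may be as large as $\lceil 2L/M\rceil$), so that it only partially overlaps $B(x_1,L)$ and its visible area is $\pi\big(L^2-((i-1)M)^2\big)$ rather than the full $\pi(2i-1)M^2$. The cleanest way I would handle this is to apply the area-ratio estimate only to rings with $iM\le L$ — for which it holds with room to spare — and observe that the robots falling in the thin outer shell between radius $(i-1)M$ and $L$ are simply extra activated robots that only help the downstream timing argument and need not be charged to any $S_i$; this is precisely where the slack factor $1/4$ earns its keep, and it is the place where I would be most careful about constants. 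Summing the per-ring bounds (and noting that ring $i$ can be nonempty only when $(i-1)M<L$) then finishes the proof. Finally, if the ``leader at the center'' assumption is later relaxed, the same area-ratio computation carries over verbatim with the annulus intersected against an off-center disk of radius $L$, and the factor $1/4$ is again the quantity that absorbs the geometric loss.
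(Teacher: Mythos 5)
Your argument is the natural one and, as far as can be judged (the paper defers this proof to the technical report \cite{techrep2018nf}), it is the intended one: with the leader at the center, apply linearity of expectation over the remaining robots and compare the annulus area $\pi(2i-1)M^2$ with the disk area $\pi L^2$. Two points. First, a small arithmetic slip: for $iM \le L$ the exact value is $\mathbb{E}[S_i] = (n-1)(2i-1)M^2/L^2$, not $n(2i-1)M^2/L^2$, so after spending a factor of $2$ on $n-1 \ge n/2$ your bound is twice, not four times, the claimed one; the conclusion for interior rings is unaffected. Second, you correctly isolate the genuine difficulty, namely the ring with $(i-1)M < L < iM$, but your resolution (apply the estimate only when $iM \le L$ and declare the leftover robots harmless) is a repair of the proposition's downstream use rather than a proof of the statement as written; indeed no constant can rescue the per-ring claim there, since the visible area $\pi\bigl(L^2 - ((i-1)M)^2\bigr)$ can be made arbitrarily small relative to $\pi(2i-1)M^2$ by taking $L$ just above $(i-1)M$. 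The clean way to close the argument is to pass immediately to the cumulative quantity of Proposition~\ref{prop:nRobUpToRing}, which is all that Lemma~\ref{lem:sol-unif} actually uses: summing your per-ring bounds over interior rings gives $\mathbb{E}[S_{1:i}] \ge n(iM)^2/4L^2$ for $iM \le L$ via $\sum_{j\le i}(2j-1)=i^2$, while for $L < iM \le 2L$ one has $\mathbb{E}[S_{1:i}] = n-1$, which already exceeds $n(iM)^2/4L^2$ up to the same $n$ versus $n-1$ quibble — this is precisely where the factor $1/4$ earns its keep, and the cumulative statement (unlike the per-ring one) is also the version that survives moving the leader off-center.
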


%\begin{proof}
%The expected number of robots within a single ring is proportional to the size of the ring.
%If there are $N$ rings, then the outermost ring has radius $NM$ and the region that includes all the robots has area $\pi (NM)^2$.
%The $i$-th ring has radius $i M$, so its area is $\pi (iM)^2 - \pi((i-1)M)^2 = \pi M^2 (2i-1)$.
%Thus, the expected number of robots in ring $i$ is bounded as $\mathbb{E}[S_i] \geq n (2i-1)/N^2 \geq n (2i-1)M^2/4L^2$, since $N \leq 2L/M$.
%\end{proof}

\begin{proposition}
\label{prop:nRobUpToRing}
The expected number of robots in up to and including the $i$-th innermost ring is lower bounded as $\mathbb{E}[S_{1:i}] \geq n (iM)^2/4L^2$.
\end{proposition}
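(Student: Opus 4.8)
The plan is to obtain this bound directly from Proposition~\ref{prop:nRobInRing} by summing the per-ring estimates over the $i$ innermost rings. First I would note that the concentric rings of radii $M, 2M, \ldots, iM$ centered at the leader's initial position $x_1$ partition the disk of radius $iM$ into $i$ pairwise disjoint annuli (cf.\ the ring partition of Proposition~\ref{prop:nRings}), so that $S_{1:i} = \sum_{j=1}^{i} S_j$ exactly, with no double counting and no uncovered region. Taking expectations and using linearity then gives $\mathbb{E}[S_{1:i}] = \sum_{j=1}^{i} \mathbb{E}[S_j]$, still under the centering assumption (leader at the center of the area) stated just before Proposition~\ref{prop:nRobInRing}.

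Next I would substitute the per-ring lower bound $\mathbb{E}[S_j] \geq n(2j-1)M^2/(4L^2)$ from Proposition~\ref{prop:nRobInRing}, which yields
\[
\mathbb{E}[S_{1:i}] \;\geq\; \frac{nM^2}{4L^2}\sum_{j=1}^{i}(2j-1).
\]
The finishing step is the elementary identity $\sum_{j=1}^{i}(2j-1) = i^2$ (the sum of the first $i$ odd numbers), which turns the right-hand side into $n\,i^2 M^2/(4L^2) = n(iM)^2/(4L^2)$, exactly the claimed bound.

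There is no genuinely hard step here; the only things that need care are bookkeeping: checking that the rings are truly disjoint so the linearity step is legitimate, confirming that Proposition~\ref{prop:nRobInRing} is invoked under the same leader-at-center hypothesis, and being slightly careful when $iM$ exceeds $L$ (the outermost ring is then only partially inside the environment, but the decomposition and the inequality still hold since all robots lie within the area). As a consistency check, bounding $\mathbb{E}[S_{1:i}]$ instead by the naive area ratio of the disk of radius $iM$ to the environment disk of radius $L$ gives $n(iM)^2/L^2$, which exceeds the stated bound by the same constant factor that Proposition~\ref{prop:nRobInRing} carries to accommodate the leader not necessarily sitting at the center; this confirms the bound is of the right order.
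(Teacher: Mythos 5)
Your proof is correct and is essentially the argument the paper intends: the constants in Proposition~\ref{prop:nRobInRing} are clearly chosen so that summing $n(2j-1)M^2/4L^2$ over the $i$ disjoint rings telescopes via $\sum_{j=1}^{i}(2j-1)=i^2$ to the stated bound. The only quibble is in your closing aside: the factor of $4$ in Propositions~\ref{prop:nRobInRing} and \ref{prop:nRobUpToRing} is present already under the leader-at-center assumption (the off-center case costs an \emph{additional} factor of four, as noted in the proof of Lemma~\ref{lem:sol-unif}), but this does not affect your derivation.
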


%\begin{proof}
%The proof follows by noting that $\mathbb{E}[S_{1:i}] = \sum_{j=1}^i {\mathbb{E}[S_j]}$ and $\sum_{j=1}^i {(2j-1)} = i^2$.
%\end{proof}

The proofs of Propositions~\ref{prop:nRings}, \ref{prop:nRobInRing} and \ref{prop:nRobUpToRing} are given in \cite{techrep2018nf}.

\begin{lemma}
\label{lem:sol-unif}
Suppose $n$ robots are distributed uniformly at random over a circular area of radius $L$. The expected time to explore all robots is $O(L^2 \log n/n)$ using \algo{}.
\end{lemma}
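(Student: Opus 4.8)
\emph{Proof proposal.} The plan is to re-run the phase analysis used for arbitrary deployments, but replace the crude lower bound $k_t \ge t$ on the number of active robots by the density estimates of Propositions~\ref{prop:nRobInRing} and~\ref{prop:nRobUpToRing}. Decompose the execution of \algo{} into phases exactly as before, so phase $t$ ends at the end of the round in which the leader first reaches distance $tM$ from $x_1$. By Proposition~\ref{prop:nRings}, each of the $N \le 2L/M$ rings of radius $tM$ contains at least one robot, so there are at most $N$ phases and every phase discovers a new robot. Writing $k_t$ for the number of active robots (other than the leader) during phase $t$ and $R_t$ for the radius of the leader's circle at the end of phase $t$, the same accounting as in \eqref{eq:sol}, together with Proposition~\ref{prop:roundTime}, gives, for any fixed configuration,
\begin{equation*}
SOL \;\le\; 4\pi M^2 \;+\; \sum_{t=1}^{N} 2\pi\,\frac{R_t^2-R_{t-1}^2}{k_t},
\qquad\text{with } R_t - R_{t-1} < M + k_t .
\end{equation*}
Taking expectations over the random deployment reduces the lemma to bounding $\mathbb{E}[M^2]$ and each term $\mathbb{E}\big[(R_t^2-R_{t-1}^2)/k_t\big]$.

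The decisive new input is the \emph{quadratic} growth of $k_t$. Since a robot found in phase $i$ participates from phase $i+2$ on, $k_t$ equals, up to the $O(1)$ index shift, the number of robots inside ring $t-2$, and Proposition~\ref{prop:nRobUpToRing} gives $\mathbb{E}[k_t] = \Omega\big(n\,t^2M^2/L^2\big)$. Assuming the technical fact that the leader's radii stay of order $tM$ across the phases that host rounds, so that $R_t^2 - R_{t-1}^2 = O(tM^2)$ as in the sparse analysis, on the event $k_t \ge \tfrac12\mathbb{E}[k_t]$ one gets
\begin{equation*}
\frac{R_t^2-R_{t-1}^2}{k_t} \;=\; O\!\left(\frac{tM^2}{n\,t^2M^2/L^2}\right) \;=\; O\!\left(\frac{L^2}{n\,t}\right),
\end{equation*}
where the factor $M^2$ cancels. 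Summing over phases and using $\sum_{t=1}^{N}1/t = O(\log N)$ yields $\mathbb{E}[SOL] = O(\mathbb{E}[M^2]) + O\big((L^2/n)\log N\big)$.

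To make this rigorous and put it in the stated form, two routine clean-ups remain. First, the concentration of $k_t$: it is a sum of independent indicator variables, so $k_t \ge \tfrac12\mathbb{E}[k_t]$ fails with small probability (polynomially small for moderate $t$, super-polynomially for large $t$); for the $O(1)$ many phases with $t$ too small for this to bite one instead uses the deterministic $k_t \ge t$, which costs only an extra $O(M^2)$, and on the rare deviation events one falls back on the crude bound $SOL \le 4\pi M^2 + 2\pi R_N^2 = O((L+n)^2)$ (valid since $k_t\ge1$), whose contribution to the expectation is lower order. Second, the conversion of $\log N$ and $\mathbb{E}[M^2]$: for $n$ points drawn uniformly in a disk of radius $L$, the longest MST edge obeys the classical scaling $M = \Theta\big(L\sqrt{\log n/n}\big)$ with high probability and is bounded away from $0$ with probability $1-e^{-\Omega(n)}$, so $\mathbb{E}[M^2] = O(L^2\log n/n)$ and $\log N = O(\log(L/M)) = O(\log n)$, the complementary event contributing only an $e^{-\Omega(n)}$ term. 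Substituting gives $\mathbb{E}[SOL] = O(L^2\log n/n)$, equivalently $O(M^2)$.

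The hard part is the radius control invoked above. Unlike the sparse arbitrary case, here $k_t$ routinely exceeds $M$, so a single round can carry the leader past many ring thresholds at once; one must show that the phases this skips contribute nothing to the sum while, over the rounds that do occur, the product $(R_t-R_{t-1})(R_t+R_{t-1}) = R_t^2 - R_{t-1}^2$ still telescopes against the growing denominators $k_t$ so that only the harmonic sum $\sum_t 1/t$ survives. The concentration step and the MST longest-edge estimate are standard, but must be arranged so that the rare deviation events and the crude worst-case fallback do not inflate $\mathbb{E}[SOL]$ beyond $O(L^2\log n/n)$.
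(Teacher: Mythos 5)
Your proposal follows essentially the same route as the paper: decompose the execution into phases/rings of width $M$, bound each phase's time by (twice) the ring area $\pi M^2(2i-1)$ divided by the number of active robots, plug in the lower bound $\mathbb{E}[S_{1:i}] \geq n(iM)^2/4L^2$ from Proposition~\ref{prop:nRobUpToRing} so that $M^2$ cancels, and sum the harmonic series using $2L/M \leq n$. The additional issues you flag --- concentration of $k_t$ around its expectation (versus simply dividing by $\mathbb{E}[k_t]$), the randomness of $M$, and the radius control when a single round carries the leader past several ring boundaries --- are genuine, but the paper's own proof does not address any of them and just substitutes the expected count into the denominator in \eqref{eq:solUnif}, so on those points your sketch is strictly more careful than, rather than divergent from, the published argument.
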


\begin{proof}
From Proposition~\ref{prop:roundTime}, we know that the completion time of a round is no more than twice the size of a block in that round.
%The time it takes to finish a phase is the sum of the completion time of the rounds within the phase.
Since the $i$-th innermost ring has area $\pi M^2 (2i-1)$, we bound the expected execution time of \algo{} as follows.

\begin{equation}
\label{eq:solUnif}
\begin{split}
\mathbb{E}[SOL] & \leq \sum_{i=1}^{2L/M}{\frac{2\pi M^2 (2i-1)}{n \frac{i^2 M^2}{4L^2}}} \leq \sum_{i=1}^{2L/M}{16\pi \frac{L^2}{n i} } \\
	& \approx 16\pi L^2 \frac{\log (2L/M)}{n} \leq 16\pi L^2 \frac{\log n}{n},
\end{split}
\end{equation}

where the expectation is taken over all instances of the problem that are equally likely. 
The last inequality holds since $2L$ is the maximum possible pairwise distance and $M n \geq 2L$.
We can see that the exploration time is in $O(L^2 \log n/n)$.
Note that this bound does not change even when the leader is not at the center because the expected value of $S_{1:i}$ would be at least one fourth of the result in Proposition~\ref{prop:nRobUpToRing}.
\end{proof}

%Next, we continue by analyzing the lower bound for the optimal solution when the robot positions are chosen uniformly at random.

The longest edge length $M$ of the MST $\mathcal{T}$ determines an important property in initial configurations.
Suppose $G(n,r)$ denotes a Random Geometric Graph (RGG) of $n$ nodes each with communication radius $r$.
A two dimensional RGG is constructed by uniformly distributing $n$ points over a unit ball, and placing an edge between two nodes if they are within $r$ of each other.

%An MST on a set of points is also a Minimum Bottleneck Spanning Tree, that is no other spanning tree can have a maximum edge length smaller than $M$, on the same set of points.
An MST on a set of points is the spanning tree with the smallest maximum edge length.
Then, we can observe that $M$ is the critical distance to make $G(n,M)$ connected, because otherwise the MST would have a smaller longest edge.
%The critical distance to make $G(n,r)$ connected is well-studied in the literature.
The following result establishes a bound for the connectivity of an RGG.

\begin{lemma} (\cite{boyd2006randomized,gupta1999critical})
A two dimensional geometric random graph $G(n,r)$ is connected with probability at least $1 - 1/n^2$, if $r \geq \sqrt{2 \log n / n}$.
\label{lem:rgg}
\end{lemma}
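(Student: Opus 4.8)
The plan is to establish this connectivity bound for the Random Geometric Graph $G(n,r)$ by a union-bound argument over a fine grid of the unit ball, combined with a Chernoff-type concentration estimate for the number of points in each grid cell. First I would fix a radius $r \geq \sqrt{2\log n / n}$ and tile the unit ball with axis-aligned squares of side length $r/(2\sqrt{2})$, so that any two points lying in the same cell or in horizontally/vertically adjacent cells are within distance $r$ of each other. The total number of such cells is $\Theta(1/r^2) = O(n/\log n)$. The key claim is that, with high probability, every cell that lies (at least partially) inside the ball contains at least one of the $n$ sample points; once this holds, the occupied cells form a connected "blob" under adjacency, and since adjacent occupied cells contribute mutually connected points, $G(n,r)$ is connected.

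Next I would quantify "with high probability." A fixed cell fully inside the unit ball has area $r^2/8$, so relative to the ball of area $\pi$ it captures each sample point independently with probability $p = r^2/(8\pi)$. The probability that this cell is empty is $(1-p)^n \leq e^{-pn} = e^{-r^2 n/(8\pi)}$. Plugging in $r^2 \geq 2\log n/n$ gives an empty-cell probability of at most $e^{-(\log n)/(4\pi)} = n^{-1/(4\pi)}$, which is far too weak — it does not even go to zero fast enough to beat the $O(n/\log n)$ cells in the union bound. So the naive constant in the cell size must be improved: I would instead choose the side length to be $r/\sqrt{5}$ (the diameter of such a cell is $r\sqrt{2/5} < r$, still guaranteeing same-cell adjacency, and one can check that a slightly coarser adjacency pattern — cells whose centers are within $\sqrt{2}$ grid steps — still keeps the chosen points within $r$), or more cleanly follow the standard approach of \cite{gupta1999critical}: cover the ball with $O(n/\log n)$ cells of area $c\log n/n$ for a suitable constant $c$, so that an empty cell has probability at most $e^{-c\log n} = n^{-c}$, and pick $c$ large enough (here the threshold $r \geq \sqrt{2\log n/n}$ is calibrated precisely so that $c$ can be taken to be $2$ up to the grid-packing constants) that the union bound over all cells yields failure probability at most $n \cdot n^{-c} \leq 1/n^2$.

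The remaining step is bookkeeping: assuming every cell intersecting the ball is nonempty, I would argue that the induced graph on cells (edge between cells sharing a side) is connected because the ball is a connected region tiled by the grid, and then lift this to $G(n,r)$ — picking one sample point per cell, adjacent cells give points within distance $r$, hence an edge in $G(n,r)$, and every other point in a cell is within $r$ of the chosen representative, so the whole vertex set is connected. Combining, $G(n,r)$ is connected whenever the "all cells nonempty" event holds, which has probability at least $1 - 1/n^2$.

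The main obstacle I anticipate is pinning down the grid constants so that the exponent in the per-cell empty probability is genuinely large enough to survive the union bound with the stated threshold $r \geq \sqrt{2\log n/n}$ and deliver exactly the $1 - 1/n^2$ guarantee — i.e., reconciling the geometric packing loss (cell side vs.\ $r$) against the probabilistic budget. This is why I would ultimately defer to the calibration already carried out in \cite{gupta1999critical} rather than re-deriving the sharp constant from scratch; since the lemma is quoted as a known result, citing that argument is the intended route, and the proof here is essentially a pointer plus the grid-connectivity observation.
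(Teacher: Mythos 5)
The paper does not prove this lemma at all: it is quoted verbatim as a known result with citations to \cite{boyd2006randomized,gupta1999critical}, so "citing the calibrated argument" is indeed the intended route, and to that extent your final paragraph matches the paper. However, the self-contained argument you sketch before deferring has a genuine gap, and it is worth being precise about why, because it is not merely a matter of "pinning down constants." Any cell that guarantees same-cell adjacency must have diameter at most $r$, hence area at most $\pi r^2/4$ (and at most $r^2/2$ for a square); with $r^2 = 2\log n/n$ over the unit disk of area $\pi$, the per-cell occupancy probability is therefore at most about $(\log n/n)/\pi$ per point, giving an empty-cell probability no better than roughly $e^{-2\log n/\pi} = n^{-2/\pi}$ with $2/\pi < 1$. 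Since there are $\Theta(n/\log n)$ cells, the union bound does not even tend to zero — the "every cell is occupied" event simply fails to hold with high probability at this radius. No choice of tile shape or adjacency pattern fixes this: the occupancy method intrinsically requires $r^2 \gtrsim c\log n/n$ with $c$ on the order of $3\pi$, a constant factor above the stated threshold.

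The reason the lemma is nonetheless true is that the references do not argue via full grid occupancy. The Gupta--Kumar/Penrose line of proof shows that, near the connectivity threshold, the dominant obstruction to connectivity is the existence of an \emph{isolated node}, and then bounds the expected number of isolated nodes by $n(1-r^2)^{n-1}\approx n e^{-nr^2} = n^{-1}$ at $r^2 = 2\log n/n$ (with further work to handle boundary effects and to rule out larger isolated components, which is the technically hard step). That is a genuinely different mechanism from your grid argument, and it is the missing idea here. Since the lemma is presented in the paper as an imported black box, the cleanest fix is to state it purely as a citation, as the paper does; if you want to include a proof sketch, it should be the isolated-node argument rather than the tessellation union bound, which provably cannot reach the constant $2$ in $r \geq \sqrt{2\log n/n}$.
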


Lemma~\ref{lem:rgg} is remarkable since it tells us that the critical distance to make an RGG connected is $M = O(L \sqrt{\log n/n})$, when the points are distributed over a circle of radius $L$.
%We also investigated this bound for connectivity in simulations.
%Figure~\ref{fig:Mbound} shows the average values of $M$ for 20 to 1000 robots over 1000 trials.
%The robots are distributed uniformly at random over a circle of radius $L = 20$.
%The expressions $L \sqrt{\log n/n}$ and $L \sqrt{2\log n/n}$ act as lower and upper bounds for the mean values of $M$.
%Then, we can conclude that $M = O(L \sqrt{\log n/n})$, if the robot positions are chosen uniformly at random.
%
%
%\begin{figure}[ht!]
%\centering
%\includegraphics[width=0.75\columnwidth]{Mvals.eps}
%\caption{Comparison of the average values of $M$ to the bounds for $L = 20$ and varying $n = 20$ to $1000$.}
%\label{fig:Mbound}
%\end{figure}

\begin{theorem}
\label{thm:rnd}
There exists an $O(M)$-competitive algorithm to the \prob{} in unbounded environments for the case when robot initial positions are chosen uniformly at random.
\end{theorem}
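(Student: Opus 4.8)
The plan is to turn the absolute running-time estimate of Lemma~\ref{lem:sol-unif} into a competitive ratio by (a) re-expressing it in terms of $M$ using the fact that $M$ is the connectivity threshold of a random geometric graph, and (b) dividing by the trivial lower bound $OPT \ge M$. So the first step is just to recall Lemma~\ref{lem:sol-unif}: for the uniform deployment on a disk of radius $L$, $\mathbb{E}[SOL] = O(L^2 \log n / n)$ for \algo{}. The only obstacle to a clean statement is that this bound is phrased in the quantities $L$ and $n$, which are not the ones appearing in the claimed ratio.

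The second, and central, step is to eliminate $L$ and $n$ in favor of $M$. As observed just before the statement, $G(n,M)$ is connected (all MST edges have length at most $M$, hence lie in $G(n,M)$) while $G(n,r)$ is not for any $r<M$; thus $M$ is exactly the critical connectivity radius of the deployment. Rescaling Lemma~\ref{lem:rgg} from the unit disk to the disk of radius $L$ gives $M = O(L\sqrt{\log n/n})$ with probability at least $1-1/n^2$. For the matching lower bound $M = \Omega(L\sqrt{\log n/n})$ I would invoke the standard converse of the RGG connectivity threshold (for radii $o(\sqrt{\log n/n})$ the graph has isolated vertices with high probability), which is not stated in the excerpt but is classical. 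Combining the two directions yields $L^2 \log n/n = \Theta(M^2)$ with high probability, and substituting into Lemma~\ref{lem:sol-unif} gives $\mathbb{E}[SOL] = O(M^2)$.

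The third step is the easy one: since some robot has its nearest relevant neighbour at distance $M$ (the longest MST edge), no strategy can tag every robot in time less than $M$, so $OPT \ge M$; dividing gives a ratio of $O(M^2)/M = O(M)$, with \algo{} as the witnessing algorithm. The main obstacle is the probabilistic bookkeeping: $M$ is itself a random variable, so one must be precise about what ``$O(M)$-competitive'' means — the honest statement is that the ratio is $O(M)$ with high probability (and in expectation, using concentration of $M$ around $L\sqrt{\log n/n}$), with the $1/n^2$ failure event of Lemma~\ref{lem:rgg} absorbed into lower-order terms. Everything else is routine substitution once the two-sided RGG threshold is in hand.
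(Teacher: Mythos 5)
Your proposal follows essentially the same route as the paper: combine the $O(L^2 \log n/n)$ bound of Lemma~\ref{lem:sol-unif} with the trivial lower bound $OPT \geq M$ and the RGG connectivity threshold to convert $L$ and $n$ into $M$. The one place where you go beyond the paper's own (one-line) proof is in the direction of the threshold estimate, and you are right to do so: the paper invokes only $M = O(L\sqrt{\log n/n})$ from Lemma~\ref{lem:rgg}, but the step $L^2\log n/n = O(M^2)$ that the ratio computation actually needs is the \emph{reverse} inequality $M = \Omega(L\sqrt{\log n/n})$, i.e., the classical converse of the connectivity threshold (below the threshold the graph has isolated vertices with high probability, so the MST must contain an edge at least that long). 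Your version, which states the two-sided $M = \Theta(L\sqrt{\log n/n})$ explicitly and flags that $M$ is a random variable so the claim should be read as holding with high probability, is a more honest rendering of the same argument; the paper implicitly treats the $O(\cdot)$ as a $\Theta(\cdot)$ and absorbs the failure probability without comment. No genuine gap in your reasoning.
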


\begin{proof}
In Lemma~\ref{lem:sol-unif}, we showed that the expected solution of \algo{} is $O(L^2 \log n/n)$. The proof follows by lower bounding the optimal solution as $M = O(L \sqrt{\log n/n})$.
\end{proof}

Theorem~\ref{thm:rnd} is our main result for the case of random deployments. 
In the next section, we present simulation results to validate the performance analysis.

\section{Simulations}
\label{sec:sim}
We conducted simulations in MATLAB to confirm our analytical analyses presented in the previous section.
The robots are deployed uniformly at random over a circular area of radius $L$ which is not accessible by the robots.
In the simulations we used varying number of robots from 20 to 1000.

\begin{figure}[ht!]
\centering
\includegraphics[width=0.5\columnwidth]{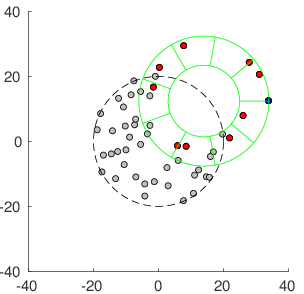}
\caption{An instance from the execution of \algo{} where 50 robots are deployed in an area of radius 20.}
\label{fig:simss}
\end{figure}

A snapshot from the execution of the algorithm is shown in Figure~\ref{fig:simss}. Here, the leader robot is shown in blue, the active robots are colored red, and the rest of the robots are gray.
The robots are distributed uniformly at random over a circular region of radius $L=20$ indicated by the dashed black circle.
The solid circles depict the blocks to be covered.

%\begin{figure}[ht!]
%\centering
%\fbox{\includegraphics[trim=.3cm .3cm 1cm 1cm,width=0.75\columnwidth]{ss2.eps}}
%\caption{An instance from the execution of the algorithm where 50 robots are deployed in an area of radius 20.}
%\label{fig:simss}
%\end{figure}

In Figure~\ref{fig:simres} we present the simulation results of our algorithm.
In this simulation, the robots are distributed uniformly at random over an area of radius $L=10$ and varying number of robots from $50$ up to $1000$ is used. For each number of robots we take the average of the algorithm's performance over 1000 trials.
We see that the $M^2 \depthM{}$ bound on arbitrary deployments fits nicely on random deployments as well. We note that the arbitrary deployments bound holds even without its coefficient derived in the analysis.
The expected bound for random deployments also fits the result especially after the number of robots is significantly large. For fewer number of robots the bound is loose, which suggests that there is a room for improvement in the analysis.

\begin{figure}[ht!]
\centering
\includegraphics[width=0.75\columnwidth]{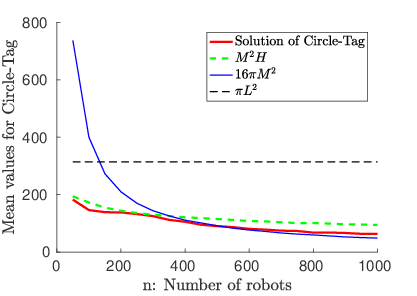}
\caption{Comparison between mean values of \algo{} performance and upper bounds for $L = 10$ and varying $n = 50$ to $1000$.}
\label{fig:simres}
\end{figure}

\section{Field Experiments}
\label{sec:exp}

In this section, we describe our multi-robot system and the field experiments where we demonstrated \algo{}.

\subsection{System Description}
The experiment requires a system of multiple robots, each able to navigate autonomously, communicate and make decisions on the fly.
Our system consists of three Uninhabited Aerial Vehicles (UAVs) and a ground station which is used to start and abort the mission.
In our experiments, we used the quadrotor DJI Matrice 100 equipped with a GPS and compass modules. As an on-board computation unit, each UAV uses the NVIDIA Jetson TX1 running Linux Ubuntu 16.04 operating system. The robots communication network is based on the XBee modules. Each robot and the base station has a XBee module so that the robot states can be monitored from the ground. This also allows the base station to interrupt the mission in case of emergency situations such as low battery voltage.

%We proceed by describing the finite state machine used by the robots during the execution of the algorithm.

\subsection{State Machines for the Mission}
%In the \prob{} we have initially a single active robot and the rest of the robots are staying put.
As part of the implementation of \algo{}, we designed finite state machines for the active and frozen robots.

Initially, the leader robot runs the state machine shown in Figure~\ref{fig:sm}(a) and all other robots run the one in \ref{fig:sm}(b).
In the beginning of the execution, all robots are in the state \texttt{IDLE} waiting for the ground station to start the mission.
The inactive robots remain in this state until they are found and became activated.
Once the leader hears from the station, it goes into the \texttt{TAKE-OFF} state. After the take-off, the leader is in the \texttt{GENERATE-WAYPOINTS} state where it computes a path generated by \algo{}.
While traveling on this path, the robot is in the \texttt{NAVIGATE} state. In this state, the active robot checks its vicinity using the XBee module and tags a frozen robot if it is nearby.
If there is a tagged robot by the end of a round, the newly activated robot starts executing the active finite state machine. The robot then takes-off and all active robots generate a new path.
This process is repeated until all robots are found and the round is finished.
After this point, the robots return to the home locations (a gathering point), and go into the \texttt{LAND} state to finalize the mission.

%\begin{figure}
%\centering     %%% not \center
%\subfigure[Active strategy]{\includegraphics[width=0.5\columnwidth]{stateActive.eps}}
%\subfigure[Frozen strategy]{\includegraphics[width=0.3\columnwidth]{stateFrozen.eps}}
%\caption{The finite state machines for the active and frozen robots}
%\label{fig:sm}
%\end{figure}

\begin{figure}
\centering     %%% not \center
\subfigure[Active strategy]{\includegraphics[width=0.5\columnwidth]{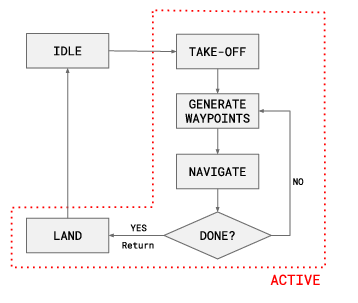}}
\subfigure[Frozen strategy]{\includegraphics[width=0.3\columnwidth]{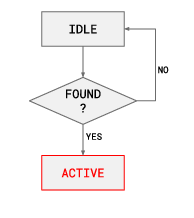}}
\caption{The finite state machines for the (a) active and (b) frozen robots}
\label{fig:sm}
\end{figure}

\subsection{Algorithm Demonstration}

We implemented the state machines in Figure~\ref{fig:sm} as Robot Operating System (ROS) based modules compatible with our hardware.
The experiments are conducted in Cedar Creek Ecosystem Science Reserve over a $100m \times 150m$ area.

Figure~\ref{fig:exp} shows the starting locations and trajectories of the robots used in the experiment.
Each solid line with a separate color indicates the trajectory of a robot, and the triangles are the starting positions of the robots. An instance of the experiment is illustrated in Figure~\ref{fig:uavFTP}.

Although our algorithm is designed for planar environments, we generalize it to the UAV setting as follows.
The highest UAV altitude is set to $h=40m$ and the robots increase the radius of their concentric-circles path by $2s = 60m$ at each circle. The communication range of the robots is $r=50m$. Then, any staying put robot is guaranteed to be explored by the search since if it has a distance $d$ to a hovering active robot, this distance will be at most $d \leq r = \sqrt{h^2 + s^2}$.

\begin{figure}[ht!]
\centering
\includegraphics[width=0.7\columnwidth]{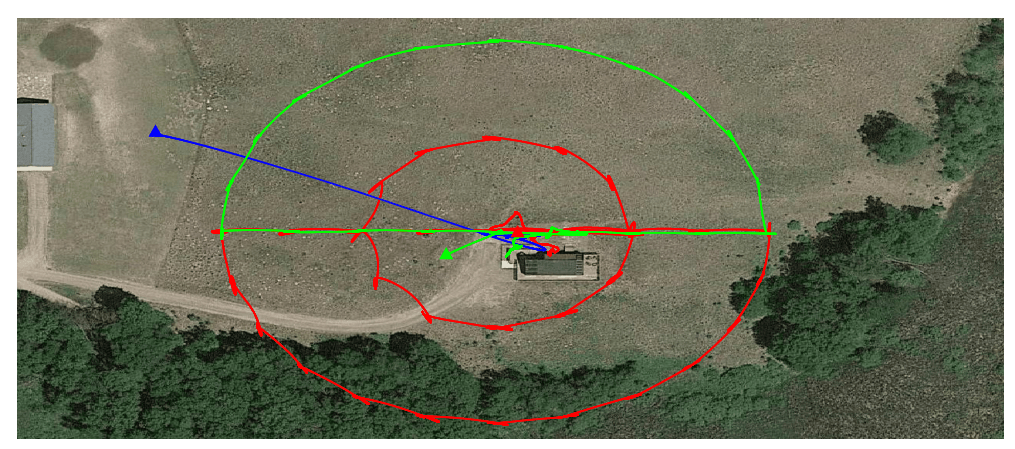}
\caption{Starting positions and trajectories of the UAVs during the mission performing \algo{}}
\label{fig:exp}
\end{figure}

\section{Conclusion}
\label{sec:conc}
In this paper, we studied the network formation problem for a multi-robot system operating in an unbounded area. Each robot has a limited sensing range, therefore only the robots in the vicinity are communicable.
For this problem, we introduced an algorithm whose competitive ratio is $O(\depthM{} \cdot \max\{M,\sqrt{M \depthM{}}\})$ for the case where the initial robot positions are chosen arbitrarily.
On the other hand, when the robots are distributed uniformly at random we improve the ratio to $O(M)$.
Future work includes removing the $O(\sqrt{M \depthM{}})$ term in the competitive ratio for the case of arbitrary deployments.
Another research direction is to consider the problem for a heterogeneous robots setting. In this case the task assignments of the robots can differ: for example, faster aerial robots can be used for exploring far away regions while the slower ground robots may be more efficient to search the nearby areas.

\bibliography{refs}
\bibliographystyle{plain}

\end{document}